\documentclass[11pt]{article}    
\usepackage[letterpaper,margin=1in]{geometry}
\usepackage{xcolor}
\usepackage{authblk}
\usepackage[utf8]{inputenc} 
\usepackage{graphicx}
\usepackage{url}
\usepackage{booktabs}
\usepackage{amsmath,amsfonts}
\usepackage{enumerate}
\usepackage{multirow}
\usepackage{color}
\usepackage{ttstyle}
\usepackage{url}
\usepackage{hyperref}
\usepackage{comment}
\usepackage{stmaryrd}
\usepackage{bm}

\usepackage{array}
\newcolumntype{C}[1]{>{\centering\arraybackslash\hspace{0pt}}p{#1}}

\usepackage{pifont}

\hyphenation{data-sets}

\begin{document}

\title{Interactive Learning from Multiple Noisy Labels}
\author[1]{Shankar Vembu}
\author[2]{Sandra Zilles}
\affil[1]{Donnelly Center for Cellular and Biomolecular Research,\break University of Toronto, Toronto, ON, Canada}
\affil[2]{Department of Computer Science,\break University of Regina, Regina, SK, Canada}
\date{}

\maketitle

\begin{abstract}
Interactive learning is a process in which a machine learning algorithm is provided with meaningful, well-chosen examples as opposed to randomly chosen examples typical in standard supervised learning. In this paper, we propose a new method for interactive learning from multiple noisy labels where we exploit the disagreement among annotators to quantify the easiness (or meaningfulness) of an example. We demonstrate the usefulness of this method in estimating the parameters of a latent variable classification model, and conduct experimental analyses on a range of synthetic and benchmark datasets. Furthermore, we theoretically analyze the performance of perceptron in this interactive learning framework.
\end{abstract}

\section{Introduction}
We consider binary classification problems in the presence of a teacher, who acts as an intermediary to provide a learning algorithm with meaningful, well-chosen examples. This setting is also known as curriculum learning \cite{BengioLCW09,KhanZM11,Zhu13} or self-paced learning \cite{KumarPK10,Jiang14,JiangMZSH15} in the literature. Existing practical methods \cite{KumarPK10,LeeG11} that employ such a teacher operate by providing the learning algorithm with easy examples first and then progressively moving on to more difficult examples. Such a strategy is known to improve the generalization ability of the learning algorithm and/or alleviate local minima problems while optimizing non-convex objective functions.

In this work, we propose a new method to quantify the notion of easiness of a training example. Specifically, we consider the setting where examples are labeled by multiple (noisy) annotators \cite{SnowOJN08,DekelS09,RaykarYZVFBM10,YanRFRD14}. We use the disagreement among these annotators to determine how easy or difficult the example is. If a majority of annotators provide the same label for an example, then it is reasonable to assume that the training example is easy to classify and that these examples are likely to be located far away from the decision boundary (separating hyperplane). If, on the other hand, there is a strong disagreement among annotators in labeling an example, then we can assume that the example is difficult to classify, meaning it is located near the decision boundary. In the paper by Urner \emph{et al.} \cite{UrnerBS12}, a strong annotator always labels an example according to the true class probability distribution, whereas a weak annotator is likely to err on an example whose  neighborhood is comprised of examples from both classes, i.e., whose neighborhood is label heterogeneous. In other words, both strong and weak annotators do not err on examples far away from the decision boundary, but weak annotators are likely to provide incorrect labels near the decision boundary where the neighborhood of an example is heterogeneous in terms of its labels. There are a few other theoretical studies where weak annotators were assumed to err in label heterogeneous regions \cite{JabbariHZ12,MalagoCR14}. The notion of annotator disagreement also shows up in the multiple teacher selective sampling algorithm of Dekel \emph{et al.} \cite{DekelGS12}. This line of research indicates the potential of using annotator disagreement to quantify the easiness of a training example. 

To the best of our knowledge, there has not been any work in the literature that investigates the use of annotator disagreement in designing an interactive learning algorithm. We note that a recent paper \cite{SharamanskaHHQ16} used annotator disagreement in a different setting, namely as privileged information in the design of classification algorithms. Self-paced learning methods \cite{KumarPK10,Jiang14,JiangMZSH15} aim at simultaneously estimating the parameters of a (linear) classifier and a parameter for each training example that quantifies its easiness. This results in a non-convex optimization problem that is solved using alternating minimization. Our setting is different as the training example is comprised of not just a single (binary) label but multiple noisy labels provided by a set of annotators, and we use the disagreement among these annotators (which is fixed) to determine how easy or difficult a training example is. We note that it is possible to parameterize the easiness of an example as described in Kumar \emph{et al.}'s paper \cite{KumarPK10} in our framework and use it in conjunction with the disagreement among annotators.

Learning from multiple noisy labels \cite{SnowOJN08,DekelS09,RaykarYZVFBM10,YanRFRD14} has been gaining traction in recent years due to the availability of inexpensive annotators from crowdsourcing websites like Amazon's \emph{Mechanical Turk}. These methods typically aim at learning a classifier from multiple noisy labels and in the process also estimate the annotators' expertise levels. We use one such method \cite{RaykarYZVFBM10} as a test bed to demonstrate the usefulness of our interactive learning framework. 

\subsection{Problem Definition and Notation} Let $\Xcal \subseteq \R^n$ denote the input space. The input to the learning algorithm is a set of $m$ examples with corresponding (noisy) labels from $L$ annotators denoted by $S=\left\{\left(x_i,y_i^{(1)},y_i^{(2)},\ldots,y_i^{(L)}\right)\right\}_{i=1}^m$ where 
$\left(x_i,y_i^{(\ell)}\right) \in \Xcal \times \{\pm 1\}$, for all $i \in \{1,\ldots,m\}$ and $\ell \in \{1,\ldots,L\}$. Let $z_1, z_2, \ldots, z_L \in [0,1]$ denote the annotators' expertise scores, which is not known to the learning algorithm. A strong annotator will have a score close to one and a weak annotator close to zero. The goal is to learn a classifier $f: \Xcal \to\{\pm 1\}$ parameterized by a weight vector $w \in \R^{n}$, and also estimate the annotators' expertise  scores $\{z_1, z_2, \ldots, z_L\}$. In this work, we consider linear models $f(x)=\ip{w}{x}$, where $\ip{\cdot}{\cdot}$ denotes the dot-product of input vectors.

\section{Learning from Multiple Noisy Labels}
\label{sec:crowdsourcing_method}
One of the algorithmic advantages of interactive learning is that it can potentially alleviate local minima problems in latent variable models \cite{KumarPK10} and also improve the generalization ability of the learning algorithm. A latent variable model that is relevant to our setting of learning from multiple noisy labels is the one proposed by Raykar \emph{et al.} \cite{RaykarYZVFBM10} to learn from crowdsourced labels. For squared loss function,\footnote{We consider squared loss function to describe our method and in our experiments for the sake of convenience. The method can be naturally extended to the classification model described in Raykar \emph{et al.}'s paper \cite{RaykarYZVFBM10}. Also, we note that it is perfectly valid to minimize squared loss function for classification problems \cite{RifkinYP03}.} i.e., regression problems and a linear model,\footnote{Although we consider linear models in our exposition, we note that our method can be adapted to accommodate any classification algorithm that can be trained with weighted examples.} the weight vector $w$ and the annotators' expertise scores (the latent variable) $\{z_{\ell}\}$ can be simultaneously estimated using the following iterative updates:
\begin{equation}
\label{eq:opt_crowds}
\begin{aligned}
& \hat{w} = \argmin_{w \in \R^n} \frac{1}{m} \sum\limits_{i=1}^m \left( \ip{w}{x_i} - \hat{y}_i \right)^2 + \lambda \|w\|^2 \  , \quad \text{with } \hat{y}_i = \frac{\sum_{\ell=1}^L \hat{z}_\ell y_i^{(\ell)}}{\sum_{\ell=1}^L \hat{z}_\ell} \ ;\\
& \frac{1}{\hat{z}_\ell} = \frac{1}{m} \sum_{i=1}^m \left(y_i^{(\ell)} - \ip{\hat{w}}{x_i} \right)^2 \ , \quad \text{for all } \ell \in \{1,\ldots,L\}\ ,
\end{aligned}
\end{equation}
where $\lambda$ is the regularization parameter. Intuitively, the updates estimate the score $z$ of an annotator based on her performance (measured in terms of squared error) with the current model $\hat{w}$, and the label of an example is adjusted $\{\hat{y}_i\}$ by taking the weighted average of all its noisy labels from the annotators. In practice, the labels $\{\hat{y}_i\}$ are initialized by taking a majority vote of the noisy labels. The above updates are  guaranteed to converge only to a locally optimum solution.

We now use the disagreement among annotators in the regularized risk minimization framework. For each example $x_i$, we compute the disagreement $d_i$ among annotators as follows: 
\begin{equation}
\label{eq:disagree}
d_i = \sum_{\ell=1}^L \sum_{\ell^\prime=1}^L  \left(y_i^{(\ell)} - y_i^{(\ell^\prime)}\right)^2 \ ,
\end{equation}
and solve a weighted least-squares regression problem:
\begin{equation}
\label{eq:opt_interactive}
\begin{aligned}
\hat{w} = \argmin_{w \in \R^n} \frac{1}{m} \sum\limits_{i=1}^m g(d_i) \left( \ip{w}{x_i} - \hat{y}_i \right)^2 + \lambda \|w\|^2 \ ,
\end{aligned}
\end{equation}
where $g: \R \to [0,1]$ is a monotonically decreasing function of the disagreement among annotators, and iteratively update $\{z_{\ell}\}$ using:
\begin{equation}
\label{eq:opt_interactive2}
\begin{aligned}
\frac{1}{\hat{z}_\ell} = \frac{1}{m} \sum_{i=1}^m g(d_i) \left(y_i^{(\ell)} - \ip{\hat{w}}{x_i} \right)^2 \ , \quad \text{for all }\ell \in \{1,\ldots,L\}\ .
\end{aligned}
\end{equation}

In our experiments, we use $g(d) = (1 + e^{\alpha d})^{-1}$. The parameter $\alpha$ controls the reweighting of examples. Large values of $\alpha$ place a lot of weight on examples with low disagreement among labels, and small values of $\alpha$ reweight all the examples (almost) uniformly as shown in Figure \ref{fig:reweight_examples}. The parameter $\alpha$ is a hyperparameter that the user has to tune akin to tuning the regularization parameter.
\begin{figure}[!t]
  \centering
    \includegraphics[width=0.9\textwidth]{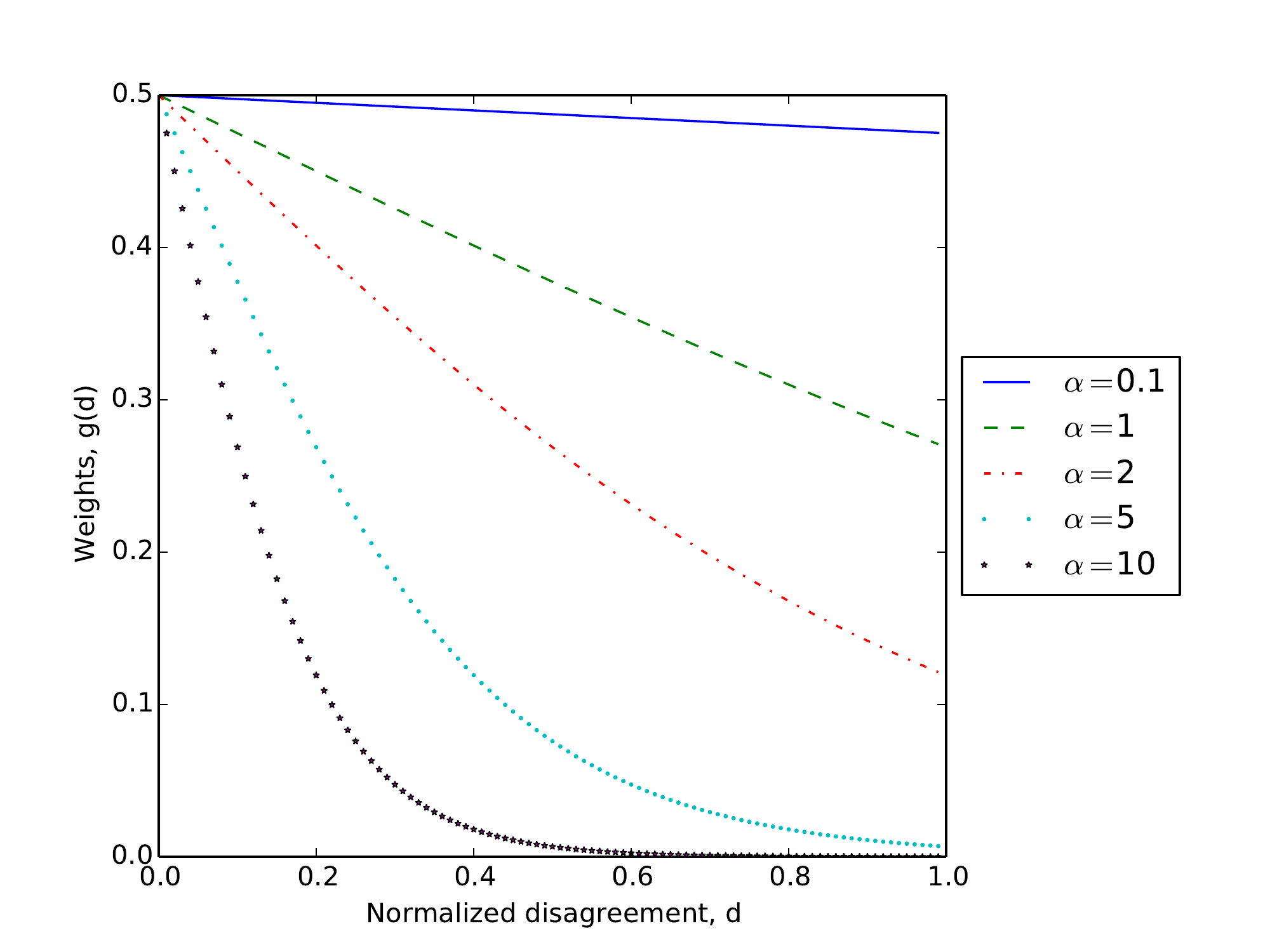}
    \caption[ ]{Example reweighting function.}
    \label{fig:reweight_examples}
\end{figure}

The optimization problem \eq{eq:opt_interactive} has a closed-form solution. Let $X \in \R^{m \times n}$ denote the matrix of inputs, $D \in \R^{m \times m}$ denote a diagonal matrix whose diagonal entries are $g(d_i)$, for all $i \in \{1,\ldots,m\}$ and $\hat{y}$ denote the (column) vector of labels. The solution is given by: 
$\hat{w}=(X^\top D X + \lambda \iden)^{-1} X^\top D \hat{y}$, where $\iden$ is the identity matrix. Hence, optimization solvers used to estimate the parameters in regularized least-squares regression can be adapted to solve this problem by a simple rescaling of inputs via $X \leftarrow \sqrt{D}X$ and $\hat{y} \leftarrow \sqrt{D}\hat{y}$.

In the above description of the algorithm, we fixed the weights $g(\cdot)$ on the examples. Ideally, we would want to reweight the examples uniformly as learning progresses. This can be done in the following way. Let $\Pcal_{\Xcal}$ denote some probability distribution induced on the examples via $g(\cdot)$. In every iteration $t$ of the learning algorithm, we pick one of $\Pcal_{\Xcal}$ or the uniform distribution based on a Bernoulli trial with success probability $1/t^c$ for some fixed positive integer $c$ to ensure that the distribution on examples converges to a uniform distribution as learning progresses. Unfortunately, we did not find this to work well in practice and the parameters of the optimization problem did not converge as smoothly as when fixed weights $g(\cdot)$ were used throughout the learning process. We leave this as an open question and use fixed weights in our experiments.

\section{Mistake Bound Analysis}
In this section, we analyze the mistake bound of perceptron operating in the interactive learning framework. The algorithm is similar to the classical perceptron, but the training examples are sorted based on their distances from the separating hyperplane and fed to the perceptron starting from the farthest example. 
The theoretical analysis requires estimates of margins of all examples. We describe a method to estimate the margin of an example and also its ground-truth label (from the multiple noisy labels) in the Appendix. We would like to remark that the margin of examples is needed only to prove the mistake bound. In practice, the perceptron algorithm can directly use the disagreement among annotators \eq{eq:disagree}.

\begin{theorem}[Perceptron \cite{Novikoff62}]
Let $((x_1,y_1), \dots, (x_T,y_T))$ be a sequence of training examples with $\|x_t\| \leq R$ for all $t \in \{1,\ldots,T\}$. Suppose there exists a vector $u$ such that $y_t \ip{u}{x_t} \geq \gamma$ for all examples. Then, the number of mistakes made by the perceptron algorithm on this sequence is at most $(R /\gamma)^2 \|u\|^2$.
\end{theorem}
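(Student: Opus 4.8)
The plan is to run the standard potential-function argument, tracking two quantities as the perceptron processes the sequence: the alignment $\ip{w}{u}$ of the current weight vector with the separating direction $u$, and the squared norm $\|w\|^2$. Recall that perceptron initializes $w=0$ and, on each mistake (i.e., whenever $y_t \ip{w}{x_t} \leq 0$), performs the additive update $w \leftarrow w + y_t x_t$, leaving $w$ unchanged on correct predictions. Let $w_k$ denote the weight vector immediately after the $k$-th mistake, so that $w_0 = 0$, and let $k$ be the total number of mistakes made on the whole sequence. The goal is to show $k \leq (R/\gamma)^2 \|u\|^2$.

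First I would establish a lower bound on the alignment. If the $k$-th mistake occurs on example $(x_t,y_t)$, then $\ip{w_k}{u} = \ip{w_{k-1}}{u} + y_t \ip{u}{x_t} \geq \ip{w_{k-1}}{u} + \gamma$, using the margin hypothesis $y_t \ip{u}{x_t} \geq \gamma$. Iterating from $w_0 = 0$ gives $\ip{w_k}{u} \geq k\gamma$, so the alignment grows at least linearly in the number of mistakes.

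Next I would establish a matching upper bound on the norm. Expanding the update yields $\|w_k\|^2 = \|w_{k-1}\|^2 + 2 y_t \ip{w_{k-1}}{x_t} + \|x_t\|^2$. The crucial observation is that, because the $k$-th step was a mistake, the cross term satisfies $y_t \ip{w_{k-1}}{x_t} \leq 0$, while $\|x_t\|^2 \leq R^2$ by hypothesis. Hence each mistake inflates the squared norm by at most $R^2$, and iterating gives $\|w_k\|^2 \leq k R^2$.

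The final step combines the two bounds through Cauchy--Schwarz, which is really the only place where any care is needed: $k\gamma \leq \ip{w_k}{u} \leq \|w_k\|\,\|u\| \leq \sqrt{k}\,R\,\|u\|$. Dividing by $\sqrt{k}$ and rearranging yields $\sqrt{k} \leq R\|u\|/\gamma$, i.e. $k \leq (R/\gamma)^2 \|u\|^2$, as claimed. I do not anticipate a genuine obstacle here: the entire argument is elementary once the two monotone potentials are identified, and the only real subtlety is recognizing that the mistake condition $y_t \ip{w_{k-1}}{x_t} \leq 0$ is exactly what kills the cross term in the norm expansion while the margin hypothesis simultaneously supplies the per-step gain in the alignment.
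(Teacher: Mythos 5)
Your proof is correct and is precisely the standard Novikoff argument that the paper invokes: the paper omits the proof, stating only that it is ``standard,'' and your two-potential scheme (lower-bounding $\ip{w_k}{u}$ by $k\gamma$, upper-bounding $\|w_k\|^2$ by $kR^2$, and combining via Cauchy--Schwarz) is exactly that standard proof. It is also the same template the paper itself extends in its proof of Theorem 2, so there is nothing to flag.
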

The above result is the well-known mistake bound of perceptron and the proof is standard. We now state the main theorem of this paper.
\begin{theorem}
\label{main_theorem}
Let $((x_1,\hat{y}_1,\hat{\gamma}_1), \dots, (x_T,\hat{y}_T,\hat{\gamma}_T))$ be a sequence of training examples along with their label and margin estimates, sorted in descending order based on the margin estimates, and with $\|x_t\| \leq R$ for all $t \in \{1,\ldots,T\}$. Let $\hat{\gamma}=\min (\hat{\gamma}_1,\dots,\hat{\gamma}_T)=\hat{\gamma}_T$ and $~K=\lceil R/\hat{\gamma} \rceil-1$. Suppose there exists a vector $u$ such that $\hat{y}_t \ip{u}{x_t} \geq \hat{\gamma}$ for all examples. Divide the input space into $K$ equal regions, so that for any example $x_{t_k}$ in a region $k$ it holds that $\hat{y}_{t_k}\ip{x_{t_k}}{u} \geq k\hat{\gamma}$. Let $\{\varepsilon_1,\ldots,\varepsilon_K\}$ denote the number of mistakes made by the perceptron in each of the $K$ regions, and let $\varepsilon=\sum_k \varepsilon_k$ denote the total number of mistakes. Define $\varepsilon_s=\sqrt{1/K\sum_k (\varepsilon_k-\varepsilon/K)^2}$ to be the standard deviation of $\{\varepsilon_1,\ldots,\varepsilon_K\}$.

Then, the number of mistakes $\varepsilon$ made by the perceptron on the sequence of training examples is bounded from above via:
\begin{align*}
\sqrt{\varepsilon} \leq \frac{R\|u\| + \sqrt{R^2\|u\|^2+\varepsilon_s K(K+1)^2\sqrt{K-1}\hat{\gamma}^2}}{\hat{\gamma}(K+1)} \ . 
\end{align*}
\end{theorem}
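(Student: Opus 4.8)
The plan is to run the standard Novikoff potential argument but to sharpen the lower bound on the aligned potential $\ip{w}{u}$ by exploiting the region structure. Let $w$ be the perceptron weight vector, initialised at $w=0$ and updated by $w \leftarrow w + \hat{y}_t x_t$ on each mistake. First I would establish the two one-sided bounds exactly as in the proof of the classical (Novikoff) theorem. For the upper bound, each mistake increases $\|w\|^2$ by at most $R^2$, since a mistake means $\hat{y}_t\ip{w}{x_t}\le 0$ and $\|x_t\|\le R$; hence after all $\varepsilon$ mistakes $\|w\|\le \sqrt{\varepsilon}\,R$, so that $\ip{w}{u}\le \|w\|\,\|u\|\le \sqrt{\varepsilon}\,R\|u\|$.

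For the lower bound I would use the region hypothesis directly: a mistake on an example $x_{t_k}$ lying in region $k$ increases $\ip{w}{u}$ by $\hat{y}_{t_k}\ip{x_{t_k}}{u}\ge k\hat{\gamma}$. Summing the $\varepsilon_k$ mistakes of each region gives $\ip{w}{u}\ge \hat{\gamma}\sum_{k=1}^K k\,\varepsilon_k$. Chaining the two bounds yields the key inequality $\hat{\gamma}\sum_k k\varepsilon_k \le \sqrt{\varepsilon}\,R\|u\|$, which reduces everything to a statement about the numbers $\varepsilon_1,\dots,\varepsilon_K$.

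The next step is to split $\sum_k k\varepsilon_k$ into a mean part and a fluctuation part. Writing $\delta_k=\varepsilon_k-\varepsilon/K$, so that $\sum_k\delta_k=0$ and $\sum_k\delta_k^2=K\varepsilon_s^2$, and using $\sum_{k=1}^K k=K(K+1)/2$, I obtain $\sum_k k\varepsilon_k=\tfrac{(K+1)}{2}\varepsilon+\sum_k k\delta_k$. The mean part is responsible for the leading $(K+1)$ improvement over the Novikoff bound, while the fluctuation part $\sum_k k\delta_k$ is where $\varepsilon_s$ enters. I would lower-bound it by $-\bigl|\sum_k k\delta_k\bigr|$ and apply Cauchy--Schwarz, after centering the index against $\sum_k\delta_k=0$, to extract a factor $\sqrt{\sum_k\delta_k^2}=\sqrt{K}\,\varepsilon_s$ times a purely combinatorial factor in $K$. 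Substituting into the key inequality produces a quadratic inequality in $s=\sqrt{\varepsilon}$ of the form $\tfrac{\hat{\gamma}(K+1)}{2}s^2 - R\|u\|\,s - \hat{\gamma} B \le 0$; solving it and retaining the admissible positive root gives the claimed bound, with the discriminant term $\varepsilon_s K(K+1)^2\sqrt{K-1}\,\hat{\gamma}^2$ arising from $2\hat{\gamma}^2(K+1)B$.

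The main obstacle is pinning down the combinatorial constant in the fluctuation bound so that $B=\tfrac{1}{2}K(K+1)\sqrt{K-1}\,\varepsilon_s$ comes out exactly: this requires choosing the Cauchy--Schwarz weighting and the index-summation identity carefully, and taking the estimate in the correct (most negative) direction, since we need a lower bound on $\sum_k k\varepsilon_k$ rather than an upper one. As a sanity check I would verify the degenerate case $\varepsilon_s=0$, where the discriminant collapses to $R^2\|u\|^2$ and the bound reduces to $\sqrt{\varepsilon}\le 2R\|u\|/(\hat{\gamma}(K+1))$, i.e. the clean $4/(K+1)^2$ improvement of the Novikoff bound that the region decomposition is meant to deliver.
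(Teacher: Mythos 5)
Your proposal is correct, and it shares the paper's skeleton exactly: the Novikoff potential argument ($\|w_{T+1}\|^2 \leq \varepsilon R^2$ from mistakes, $\ip{w_{T+1}}{u} \geq \hat{\gamma}\sum_{k} k\varepsilon_k$ from the region structure, valid regardless of presentation order), chained through $\ip{w_{T+1}}{u} \leq \|w_{T+1}\|\|u\|$ into a quadratic inequality in $\sqrt{\varepsilon}$. Where you genuinely diverge is the key lemma for the fluctuation term. The paper applies the Laguerre--Samuelson inequality termwise, $\varepsilon_k \geq \varepsilon/K - \varepsilon_s\sqrt{K-1}$ for every $k$, and sums against the weights $k$ to get $\sum_k k\varepsilon_k \geq \left(\varepsilon/K - \varepsilon_s\sqrt{K-1}\right)K(K+1)/2$, which yields the theorem's constant $B = \tfrac{1}{2}K(K+1)\sqrt{K-1}\,\varepsilon_s$ exactly and in one line (the paper even states Laguerre--Samuelson as a standalone lemma for this purpose). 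Your mean/fluctuation decomposition with centered Cauchy--Schwarz instead gives, with $c = (K+1)/2$ and using $\sum_k \delta_k = 0$, the bound $\bigl|\sum_k k\delta_k\bigr| = \bigl|\sum_k (k-c)\delta_k\bigr| \leq \sqrt{K(K^2-1)/12}\cdot\sqrt{K}\,\varepsilon_s = \tfrac{K}{2}\sqrt{(K^2-1)/3}\,\varepsilon_s$, which is \emph{smaller} than the paper's $B$ by a factor of $\sqrt{3(K+1)}$. This means the ``main obstacle'' you flag --- choosing the Cauchy--Schwarz weighting so that $B$ comes out exactly --- dissolves: no weighting will reproduce the paper's constant, but none is needed, because the positive root of the quadratic $\tfrac{\hat{\gamma}(K+1)}{2}s^2 - R\|u\|\,s - \hat{\gamma}B' \leq 0$ is monotone increasing in $B'$, so any $B' \leq B$ implies the stated bound a fortiori. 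In fact your route proves a strictly stronger theorem, with the discriminant term $\varepsilon_s K(K+1)^2\sqrt{K-1}\,\hat{\gamma}^2$ improved to $\varepsilon_s K(K+1)\sqrt{(K^2-1)/3}\,\hat{\gamma}^2$; the trade-off is that the paper's named-inequality route reproduces the stated constant verbatim with no computation, while yours is more elementary (bare Cauchy--Schwarz) and tighter. Your sanity check at $\varepsilon_s = 0$ matches the paper's own remark, $\varepsilon \leq 4R^2\|u\|^2/(\hat{\gamma}^2(K+1)^2)$, under either route.
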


We will use the following inequality in proving the above result.
\begin{lemma}[Laguerre-Samuelson Inequality \cite{Samuelson68}]
Let $(r_1, \dots, r_n)$ be a sequence of real numbers. Let $\bar{r} = \sum_i r_i /n$ and $s=\sqrt{1/n\sum_i (r_i-\bar{r})^2}$ denote their mean and standard deviation, respectively. Then, the following inequality holds for all $i \in \{1,\ldots,n\}$: $\bar{r} -s\sqrt{n-1} \leq r_i \leq \bar{r} + s\sqrt{n-1}$.
\end{lemma}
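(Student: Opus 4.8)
The plan is to reduce the two-sided bound to a single squared inequality and then obtain that inequality from Cauchy--Schwarz applied to the deviations of all but one entry.

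First I would center the data by introducing the deviations $d_i = r_i - \bar{r}$. By definition of the mean, $\sum_{i=1}^n d_i = 0$, and by definition of the standard deviation, $\sum_{i=1}^n d_i^2 = n s^2$. The claimed two-sided bound $\bar{r} - s\sqrt{n-1} \leq r_i \leq \bar{r} + s\sqrt{n-1}$ is then equivalent to $-s\sqrt{n-1} \leq d_i \leq s\sqrt{n-1}$, that is, to $|d_i| \leq s\sqrt{n-1}$, or after squaring $d_i^2 \leq (n-1) s^2$. Establishing this single squared inequality for every $i$ therefore yields both directions at once, so no separate argument for the lower bound is needed.

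Next I would fix an arbitrary index $k$ and isolate its contribution from the two constraints above. Since the full sum of deviations vanishes, the remaining $n-1$ deviations satisfy $\sum_{i \neq k} d_i = -d_k$; and since the full sum of squares equals $n s^2$, they satisfy $\sum_{i \neq k} d_i^2 = n s^2 - d_k^2$. The key step is then to apply the Cauchy--Schwarz inequality to the vector $(d_i)_{i \neq k}$ against the all-ones vector of length $n-1$, which gives $\left(\sum_{i \neq k} d_i\right)^2 \leq (n-1) \sum_{i \neq k} d_i^2$. Substituting the two identities yields $d_k^2 \leq (n-1)(n s^2 - d_k^2)$.

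Finally I would rearrange this to $n\,d_k^2 \leq (n-1)\,n\,s^2$, hence $d_k^2 \leq (n-1) s^2$, i.e. $|d_k| \leq s\sqrt{n-1}$, which is exactly what was needed; since $k$ was arbitrary the bound holds for all indices. I do not anticipate a genuine obstacle here: the only real idea is to exclude the entry being bounded \emph{before} invoking Cauchy--Schwarz, after which the rest is routine algebra. The one point worth recording is the equality case, since it clarifies why the constant $\sqrt{n-1}$ is sharp: Cauchy--Schwarz is tight precisely when the remaining $d_i$ are all equal, which corresponds to one entry taking an extreme value while the other $n-1$ entries coincide.
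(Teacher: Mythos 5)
Your proof is correct: centering the data, isolating the index $k$, and applying Cauchy--Schwarz to the remaining $n-1$ deviations against the all-ones vector gives exactly $d_k^2 \leq (n-1)(ns^2 - d_k^2)$, which rearranges to $|d_k| \leq s\sqrt{n-1}$, and the squared formulation does capture both sides of the bound at once. Note that the paper itself gives no proof of this lemma --- it is stated as a known result with a citation to Samuelson --- so there is nothing to compare against; your argument is the standard proof of the Laguerre--Samuelson inequality, and your remark on the equality case (one extreme entry, the other $n-1$ equal) correctly identifies why the constant $\sqrt{n-1}$ is sharp.
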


\begin{proof}
Using the margin estimates $\hat{\gamma}_1,\dots,\hat{\gamma}_T$, we divide the input space into $K=\lceil R/\hat{\gamma} \rceil-1$ equal regions, so that for any example $x_{t_k}$ in a region $k$, $\hat{y}_{t_k}\ip{x_{t_k}}{u} \geq k\hat{\gamma}$. Let $T_1, \dots,T_K$ be the number of examples in these regions, respectively. Let $\tau_t$ be an indicator variable whose value is 1 if the algorithm makes a prediction mistake on example $x_t$ and 0 otherwise. Let $\varepsilon_k = \sum_{t=i}^{T_k} \tau_{t_i}$ be the number of mistakes made by the algorithm in region $k$, $\varepsilon = \sum_k\varepsilon_k$ be the total number of mistakes made by the algorithm.

We first bound $\|w_{T+1}\|^2$, the weight vector after seeing $T$ examples, from above. If the algorithm makes a mistake at iteration $t$, then $\|w_{t+1}\|^2 = \|w_{t} + \hat{y}_{t}x_{t}\|^2 = \|w_{t}\|^2 + \|x_{t}\|^2 +2\hat{y}_{t}\ip{w_t}{x_{t}} \leq \|w_{t}\|^2 +R^2$, since $\hat{y}_{t}\ip{w_t}{x_{t}}<0$. Since $w_1=0$, we have $\|w_{T+1}\|^2\leq \varepsilon R^2$.

Next, we bound $\ip{w_{{T+1}}}{u}$ from below. Consider the behavior of the algorithm on examples that are located in the farthest region $K$. When a prediction mistake is made in this region at iteration $t_K+1$, we have $\ip{w_{{t_K}+1}}{u}=\ip{w_{t_K}+\hat{y}_{t_K}x_{t_K}}{u}=\ip{w_{t_K}}{u} +\hat{y}_{t_K}\ip{x_{t_K}}{u} \geq \ip{w_{t_K}}{u}+K\hat{\gamma}$.  The weight vector moves closer to $u$ by at least $K\hat{\gamma}$. After the algorithm sees all examples in the farthest region $K$, we have $\ip{w_{T_K+1}}{u} \geq \varepsilon_K K\hat{\gamma}$ (since $w_1=0$), and similarly for region ${K-1}$, $\ip{w_{T_{(K-1)}+1}}{u} \geq \varepsilon_{K} K \hat{\gamma} + \varepsilon_{K-1} (K-1) \hat{\gamma}$, and so on for other regions. Therefore, after the algorithm has seen $T$ examples,  we have 
\begin{align*}
\ip{w_{T+1}}{u}  \geq \sum_{k={1}}^{K} \varepsilon_{k} k \hat{\gamma}
\geq \left(\frac{\varepsilon}{K}-\varepsilon_s\sqrt{K-1}\right)\left(\frac{K(K+1)}{2}\right)\hat{\gamma}\ .
\end{align*}
where we used the Laguerre-Samuelson inequality to lower-bound $\varepsilon_k$ for all $k$, using the mean $\varepsilon/K$ and standard deviation $\varepsilon_s$ of $\{\varepsilon_1,\ldots,\varepsilon_K\}$.

Combining these lower and upper bounds, we get the following quadratic equation in $\sqrt{\varepsilon}$:
\begin{align*}
\left(\frac{\varepsilon}{K}-\varepsilon_s \sqrt{K-1}\right)\left(\frac{K(K+1)}{2}\right)\hat{\gamma}- \sqrt{\varepsilon}R\|u\| \leq 0 \ ,
\end{align*}
whose solution is given by:
\begin{align*}
\sqrt{\varepsilon} \leq \frac{R\|u\| + \sqrt{R^2\|u\|^2+\varepsilon_s K(K+1)^2\sqrt{K-1}\hat{\gamma}^2}}{\hat{\gamma}(K+1)}
\ . 
\end{align*}
\end{proof}

Note that if $\varepsilon_s=0$, i.e., when the number of mistakes made by the perceptron in each of the regions is the same, then we get the following mistake bound: 
\begin{align*}
\varepsilon  \leq \frac{4R^2\|u\|^2}{\hat{\gamma}^2(K+1)^2} \ ,
\end{align*}
clearly improving the mistake bound of the standard perceptron algorithm. However, $\varepsilon_s=0$ is not a realistic assumption. We therefore plot x-fold improvement of the mistake bound as a function of $\varepsilon_s$ for a range of margins $\hat{\gamma}$ in Figure \ref{fig:theory_bound}. The y-axis is the ratio of mistake bounds of interactive perceptron to standard perceptron with all examples scaled to have unit Euclidean length ($R=1$) and $\|u\|=1$. From the figure, it is clear that even when $\varepsilon_s>0$, it is possible to get non-trivial improvements in the mistake bound.
\begin{figure}[!t]
  \centering
    \includegraphics[width=0.9\textwidth]{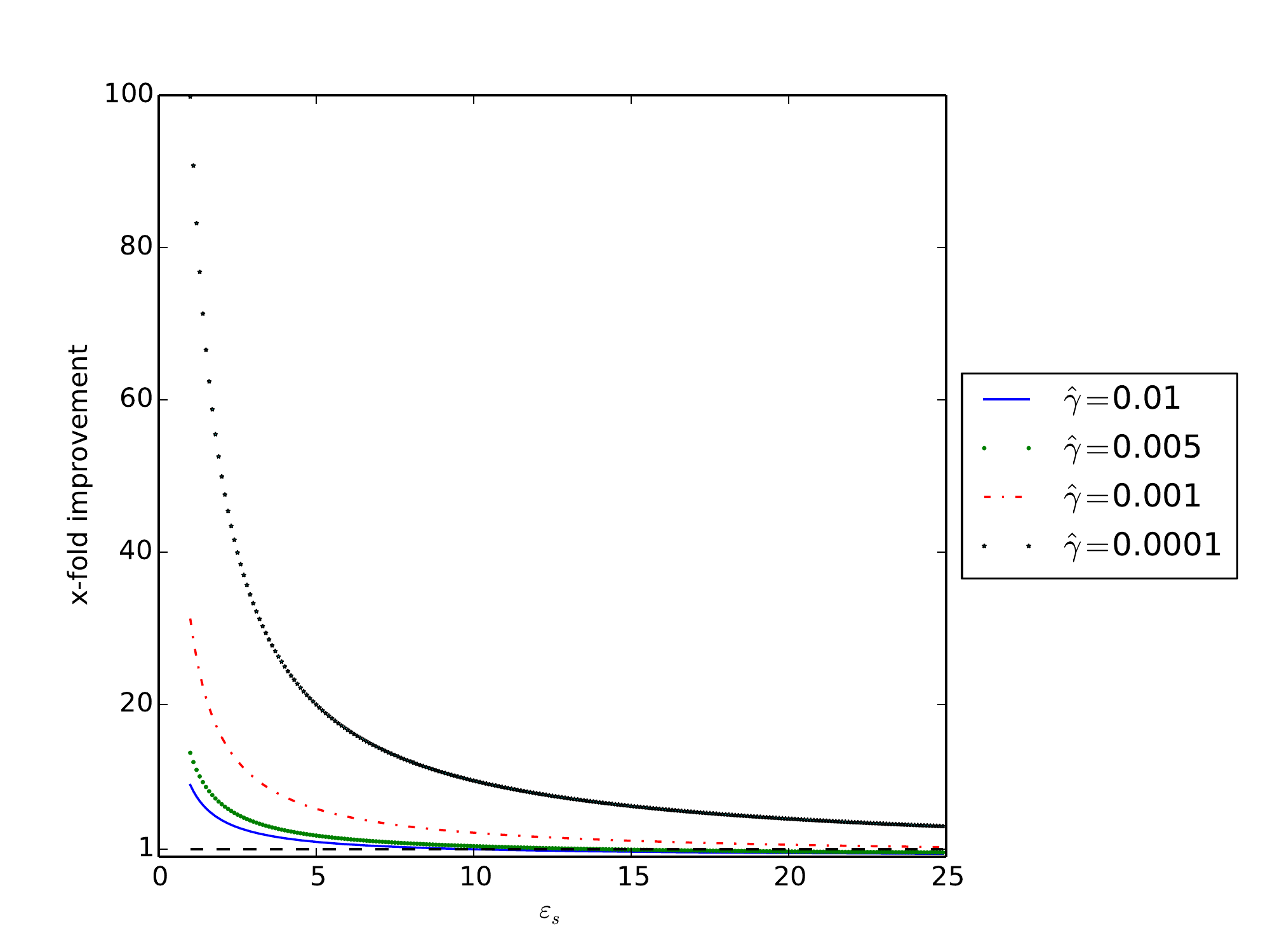}
    \caption[ ]{Illustration of the improvement in the mistake bound of interactive perceptron when compared to standard perceptron. The dashed line is $y=1$.}
    \label{fig:theory_bound}
\end{figure}

The above analysis uses margin and label estimates, $\hat{\gamma}_1,\dots,\hat{\gamma}_T$, $\hat{y}_1,\dots,\hat{y}_T$, from our method described in the Appendix, which may not be exact. We therefore have to generalize the mistake bound to account for noise in these estimates. Let $\{\gamma_1,\dots, \gamma_T\}$ be the true margins of examples. 
Let $\epsilon_{\gamma_u}, \epsilon_{\gamma_{\ell}} \in (0,1]$ denote margin noise factors such that $\hat{\gamma}_t/\epsilon_{\gamma_{\ell}} \geq \gamma_t \geq \epsilon_{\gamma_u} \hat{\gamma}_t$, for all $t \in \{1,\ldots,T\}$. These noise factors will be useful to account for overestimation and underestimation in $\hat{\gamma}_t$, respectively. 

Label noise essentially makes the classification problem linearly inseparable, and so the mistake bound can be analyzed using the method described in the work of Freund and Schapire \cite{FreundS99} (see Theorem 2). Here, we define the deviation of an example $x_t$ as $\delta_t = \max(0, \hat{\gamma}/\epsilon_{\gamma_{\ell}}-\hat{y}_t\ip{u}{x_t})$ and let $\Delta = \sqrt{\sum_t \delta_t^2}$. 
As will become clear in the analysis, if $\hat{\gamma}$ is overestimated, then it does not affect the worst-case analysis of the mistake bound in the presence of label noise.
If the labels were accurate, then $\delta_t=0$, for all $t \in \{1,\ldots,T\}$. With this notation in place, we are ready to analyze the mistake bound of perceptron in the noisy setting. Below, we state and prove the theorem for $\varepsilon_s=0$, i.e., when the number of mistakes made by the perceptron is the same in all the $K$ regions. The analysis is similar for $\varepsilon_s>0$, but involves tedious algebra and so we omit the details in this paper.

\begin{theorem}
\label{main_theorem2}
Let $((x_1,\hat{y}_1,\hat{\gamma}_1), \dots, (x_T,\hat{y}_T,\hat{\gamma}_T))$ be a sequence of training examples along with their label and margin estimates, sorted in descending order based on the margin estimates, and with $\|x_t\| \leq R$ for all $t \in \{1,\ldots,T\}$. Let $\hat{\gamma}=\min (\hat{\gamma}_1,\dots,\hat{\gamma}_T)=\hat{\gamma}_T$ and $~K=\lceil R/\hat{\gamma} \rceil-1$. Suppose there exists a vector $u$ such that $\hat{y}_t\ip{u}{x_t} \geq \hat{\gamma}$ for all the examples. Divide the input space into $K$ equal regions, so that for any example $x_{t_k}$ in a region $k$ it holds that $\hat{y}_{t_k}\ip{x_{t_k}}{u} \geq k\hat{\gamma}$. Assume that the number of mistakes made by the perceptron is equal in all the $K$ regions. Let $\{\gamma_1,\dots, \gamma_T\}$ denote the true margins of the examples, and suppose there exists $\epsilon_{\gamma_u}, \epsilon_{\gamma_{\ell}} \in (0,1]$ such that $\hat{\gamma}_t/\epsilon_{\gamma_{\ell}} \geq \gamma_t \geq \epsilon_{\gamma_u} \hat{\gamma}_t$ for all $t \in \{1,\ldots,T\}$. Define $\delta_t = \max(0, \hat{\gamma}/\epsilon_{\gamma_{\ell}}-\hat{y}_t\ip{u}{x_t})$ and  let $\Delta = \sqrt{\sum_t \delta_t^2}$. 

Then, the total number of mistakes $\varepsilon$ made by the perceptron algorithm on the sequence of training examples is bounded from above via:
\begin{align*}
\varepsilon  \leq \frac{4(\Delta + R\|u\|)^2}{\epsilon_{\gamma_u}^2\hat{\gamma}^2(K+1)^2} \ .
\end{align*}
\end{theorem}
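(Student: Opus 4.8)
The plan is to reduce Theorem~\ref{main_theorem2} to the noiseless, equal-mistakes case already in hand, namely the $\varepsilon_s=0$ specialization of Theorem~\ref{main_theorem}. That case is really just Novikoff's bound run with the ``boosted'' margin $\tfrac{1}{2}\hat\gamma(K+1)$ that the region decomposition manufactures, so the task is to see how the two noise sources degrade this boosted margin and the geometry. I would handle them separately: margin-estimate noise is absorbed by replacing the working margin $\hat\gamma$ with the guaranteed true-margin floor $\epsilon_{\gamma_u}\hat\gamma$, using $\gamma_t \ge \epsilon_{\gamma_u}\hat\gamma_t \ge \epsilon_{\gamma_u}\hat\gamma$; label noise and the resulting inseparability are handled by the Freund--Schapire device \cite{FreundS99} of lifting the instances into an enlarged space in which the deviations $\delta_t$ are made to vanish.

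Concretely, I would first set up the embedding. Append $T$ coordinates to each instance, mapping $x_t \mapsto x_t' = (x_t, 0,\dots,0,\Delta',0,\dots,0)$ with a free scale parameter $\Delta'$ in coordinate $n+t$, and lift the comparator to $u' = (u,\ \hat y_1\delta_1/\Delta',\ \dots,\ \hat y_T\delta_T/\Delta')$. This yields $\|x_t'\|^2 \le R^2 + \Delta'^2$, $\|u'\|^2 = \|u\|^2 + \Delta^2/\Delta'^2$, and $\hat y_t\langle u', x_t'\rangle = \hat y_t\langle u, x_t\rangle + \delta_t$, so the shortfall recorded by $\delta_t$ is exactly restored. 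The key observation is that the special coordinate $n+t$ of the weight vector is still zero at the moment example $t$ is predicted, so $\langle w_t', x_t'\rangle = \langle w_t, x_t\rangle$; the prediction sequence, and hence the set of mistakes and their region membership, is identical to the original run, and it suffices to bound the mistakes of the lifted run.

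Next I would rerun the two estimates of the noiseless proof in the lifted space. The weight-norm upper bound becomes $\|w_{T+1}'\|^2 \le \varepsilon(R^2 + \Delta'^2)$, since each mistake adds at most $\|x_t'\|^2 \le R^2+\Delta'^2$. For the lower bound on $\langle w_{T+1}', u'\rangle$, a mistake in region $k$ advances the inner product by $\hat y_t\langle u, x_t\rangle + \delta_t$, which I would argue is at least $k\,\epsilon_{\gamma_u}\hat\gamma$; with the equal-mistakes assumption $\varepsilon_k=\varepsilon/K$, summation over regions gives $\langle w_{T+1}', u'\rangle \ge \frac{\varepsilon}{K}\,\epsilon_{\gamma_u}\hat\gamma\cdot\frac{K(K+1)}{2} = \frac{\varepsilon(K+1)}{2}\epsilon_{\gamma_u}\hat\gamma$, exactly as before but with $\hat\gamma$ degraded to $\epsilon_{\gamma_u}\hat\gamma$. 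Combining through Cauchy--Schwarz, $\frac{\varepsilon(K+1)}{2}\epsilon_{\gamma_u}\hat\gamma \le \sqrt\varepsilon\,\sqrt{(R^2+\Delta'^2)(\|u\|^2+\Delta^2/\Delta'^2)}$, and minimizing the right-hand side over $\Delta'$ — the minimizer $\Delta'^2 = R\Delta/\|u\|$ collapses the product to $(R\|u\|+\Delta)^2$ — yields $\sqrt\varepsilon \le \frac{2(R\|u\|+\Delta)}{\epsilon_{\gamma_u}\hat\gamma(K+1)}$, which squares to the claimed bound.

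I expect the crux to be the per-region progress estimate $\hat y_t\langle u, x_t\rangle + \delta_t \ge k\,\epsilon_{\gamma_u}\hat\gamma$, since this is where the label-noise correction (which only guarantees a floor of $\hat\gamma/\epsilon_{\gamma_\ell}$ through $\delta_t$) must mesh with the region decomposition (which demands progress scaling with $k$). One must verify that examples carrying genuine deviations do not occupy the high-index regions in a way that outruns what $\delta_t$ restores — which is consistent with the paper's premise that disagreement, and hence noise, concentrates near the boundary, i.e.\ at small $k$. I would also confirm that the overestimation factor $\epsilon_{\gamma_\ell}$, which enters only through the threshold defining $\delta_t$, washes out of the final bound so that only the underestimation factor $\epsilon_{\gamma_u}$ survives, matching the remark preceding the theorem; the remaining $\Delta'$-optimization and quadratic manipulation are the routine steps already rehearsed in the noiseless proof.
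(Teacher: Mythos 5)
Your proposal takes essentially the same route as the paper's proof: the identical Freund--Schapire lifting (your $\Delta'$ is the paper's $C$, with the same coordinates $\hat{y}_t\delta_t/C$ in the comparator), the same observation that the lifted run makes the same predictions, the same $\epsilon_{\gamma_u}$-degraded region-scaled progress in the lower bound, and the same optimization $C^2 = R\Delta/\|u\|$ collapsing $(R^2+C^2)(\|u\|^2+\Delta^2/C^2)$ to $(R\|u\|+\Delta)^2$. The crux you flag---that the lifted construction only restores a uniform margin floor $\hat{\gamma}/\epsilon_{\gamma_\ell}$ while the region decomposition demands progress scaling as $k\epsilon_{\gamma_u}\hat{\gamma}$---is real but is equally glossed over in the paper's own sketch, which simply invokes Theorem \ref{main_theorem} in the lifted space; your write-up is, if anything, more explicit about where that verification is owed.
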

\begin{proof} (Sketch)
Observe that margin noise affects only the analysis that bounds $\ip{w_{{T+1}}}{u}$ from below. When a prediction mistake is made in region $K$, the weight vector moves closer to $u$ by at least $K\epsilon_{\gamma_u}\hat{\gamma}$. After the algorithm sees all examples in the farthest region $K$, we have $\ip{w_{T_K+1}}{u} \geq \varepsilon_K K \epsilon_{\gamma_u} \hat{\gamma}$ (since $w_1=0$). Therefore, margin noise has the effect of down-weighting the bound by a factor of $\epsilon_{\gamma_u}$. The rest of the proof follows using the same analysis as in the proof of Theorem \ref{main_theorem}. Note that margin noise affects the bound only when $\hat{\gamma}_t$ is overestimated because the margin appears only in the denominator when $\varepsilon_s=0$.

To account for label noise, we use the proof technique in Theorem 2 of Freund and Schapire's paper \cite{FreundS99}. The idea is to project the training examples into a higher dimensional space where the data becomes linearly separable and then invoke the mistake bound for the separable case. Specifically, for any example $x_t$, we add $T$ dimensions and form a new vector such that the first $n$ coordinates remain the same as the original input, the $(n+t)$'th coordinate gets a value equal to $C$ (a constant to be specified later), and the remaining coordinates are set to zero. Let $x^{\prime}_t \in \R^{n+T}$ for all $t \in \{1,\ldots,T\}$ denote the examples in the higher dimensional space. Similarly, we add $T$ dimensions to the weight vector $u$ such that the first $n$ coordinates remain the same as the original input, and the $(n+t)$'th coordinate is set to $\hat{y}_t \delta_t/C$, for all $t \in \{1,\ldots,T\}$. Let $u^{\prime} \in \R^{n+T}$ denote the weight vector in the higher dimensional space.

With the above construction, we have $\hat{y}_t \ip{u^{\prime}}{x_t^{\prime}} = \hat{y}_t\ip{u}{x_t} + \delta_t \geq \hat{\gamma}/\epsilon_{\gamma_{\ell}}$. In other words, examples in the higher dimensional space are linearly separable by a margin $\hat{\gamma}/\epsilon_{\gamma_{\ell}}$. Also, note that the predictions made by the perceptron in the original space are the same as those in the higher dimensional space. To invoke Theorem \ref{main_theorem}, we need to bound the length of the training examples in the higher dimensional space, which is $\|x^{\prime}_t\|^2 \leq R^2 + C^2$. Therefore the number of mistakes made by the perceptron is at most  $4(R^2+C^2)(\|u\|^2+\Delta^2/C^2)/(\epsilon_{\gamma_u}^2\hat{\gamma}^2(K+1^2))$. It is easy to verify that the bound is minimized when $C=\sqrt{R\Delta/\|u\|}$, and hence the number of mistakes is bounded from above by $4(\Delta+R\|u\|)^2(\epsilon_{\gamma_u}^2\hat{\gamma}^2(K+1)^2)$.
\end{proof}

\section{Empirical Analysis}
We conducted experiments on synthetic and benchmark datasets.\footnote{Software is available at \url{https://github.com/svembu/ilearn}.} For all datasets, we simulated annotators to generate (noisy) labels in the following way. For a given set of training examples, $\{(x_i,y_i)\}_{i=1}^m$, we first trained a linear model $f(x) = \ip{w}{x}$ with the true binary labels and normalized the scores ${f_i}$ of all examples to lie in the range $[-1,+1]$. We then transformed the scores via $\tilde{f} = 2 \times (1 - (1 / (1 + \exp(p \times -2.5 * |f|))))$, so that examples close to the decision boundary with $f_i\approx0$ get a score $\tilde{f_i}\approx 1$ and those far away from the decision boundary with $f_i\approx \pm 1$ get a score $\tilde{f_i} \approx 0$ as shown in Figure \ref{fig:scores}.
\begin{figure}[!t]
  \centering
    \includegraphics[width=0.9\textwidth]{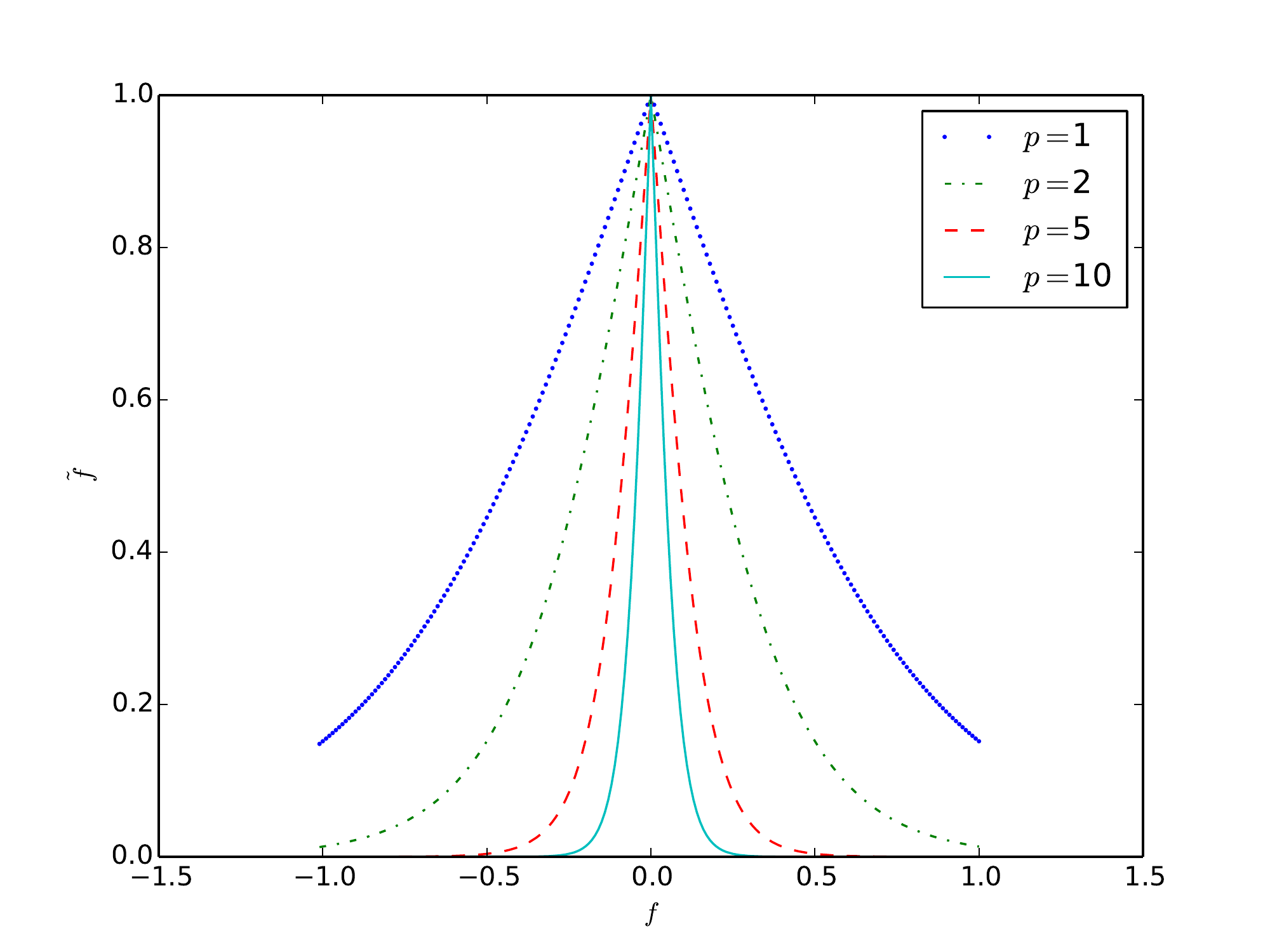}
    \caption[ ]{Illustration of the function used to convert the score from a linear model to a probability that is used to simulate noisy labels from annotators.}
    \label{fig:scores}
\end{figure}

For each example $x_i$, we generated $L$ copies of its true label, and then flipped them based on a Bernoulli trial with success probability $\tilde{f_i}/2$. This has the effect of generating (almost) equal numbers of labels with opposite sign and hence maximum disagreement among labels for examples that are close to the decision boundary. In the other extreme, labels of examples located far away from the decision boundary will not differ much. Furthermore, we flipped the sign of all labels based on a Bernoulli trial with success probability $\tilde{f_i}$ if the majority of labels is equal to the true label. This ensures that the majority of labels are noisy for examples close to the decision boundary. The noise parameter $p$  controls the amount of noise injected into the labels -- high values result in weak disagreement among annotators and low label noise, as shown in Figure \ref{fig:scores}. Table \ref{tbl:sim_labels} shows the noisy labels generated by ten annotators for $p=1$ on a simple set of one-dimensional examples in the range $[-1, +1]$. As is evident from the table, the simulation is designed in such a way that an example close to (resp. far away from) the decision boundary will have a strong (resp. weak) disagreement among its labels. 
\begin{table}[!t]
\caption{Labels provided by a set of 10 simulated annotators for a one-dimensional dataset in the range [-1,+1]}
\begin{center}
  \begin{tabular}{  C{1cm} | C{1cm} | C{1cm} | C{6cm} | C{3cm} }
      $f$ & $\tilde{f}$ & True label & Noisy Labels & Label disagreement, $d$ (Eqn. \eq{eq:disagree}) \\
    \hline
-1.0 & 0.15 & -1 & [-1, -1, -1, -1, -1, -1, -1, -1, -1, -1] & 0 \\
-0.9 & 0.19 & -1 & [ 1, -1, -1, -1, -1, -1, -1, -1, -1, -1] & 36\\
-0.8 &0.24 & -1 & [-1, -1, -1, -1,  1, -1, -1, -1, -1, -1] &36\\
-0.7 &0.3 & -1 & [-1,  1,  1, -1, -1, -1, -1,  1, -1, -1] &84\\
-0.6 &0.36 & -1 &[-1, -1, -1, -1, -1, -1, -1, -1, -1, -1] &0\\
-0.5 &0.45 & -1 & [ 1, -1,  1, -1, -1, -1, -1, -1, -1, -1] &64\\
-0.4 &0.55 & -1 &[ 1,  1,  1,  1,  1,  1, -1,  1,  1, -1] &64\\
-0.3 &0.64 & -1 &[ 1,  1,  1, -1,  1,  1, -1, -1,  1,  1] &84\\
-0.2 &0.76 & -1 &[ 1,  1,  1, -1,  1,  1,  1,  1,  1, -1] &64\\
-0.1 & 0.88 & -1 &[ 1, -1,  1, -1,  1, -1,  1,  1. -1,  1] &96\\
0 &1 & 1 &[-1,  1,  1, -1, -1, -1, -1,  1, -1, -1] &84\\
0.1& 0.88& 1 & [-1,  1, -1, -1, -1,  1,  1, -1,  1,  1] &100\\
0.2& 0.76& 1 &[-1, -1,  1, -1,  1, -1, -1, -1, -1, -1] &64\\
0.3& 0.64& 1 &[-1, -1,  1, -1, -1, -1,  1, -1,  1,  1] &96\\
0.4& 0.54 & 1 &[ 1, -1, -1,  1, -1, -1, -1, -1,  1, -1] &84\\
0.5& 0.45 & 1 &[ 1, -1, -1, -1, -1, -1, -1, -1, -1, -1] &36\\
0.6& 0.36& 1 &[ 1,  1,  1,  1,  1,  1,  1,  1, -1,  1] &36\\
0.7& 0.3& 1 &[ 1,  1,  1,  1,  1, -1, -1,  1, -1,  1] &84\\
0.8& 0.24& 1 &[-1, -1, -1, -1,  1, -1, -1, -1, -1, -1] &36\\
0.9& 0.19& 1 &[ 1,  1,  1,  1,  1,  1,  1,  1,  1,  1] &0\\
1.0& 0.15& 1 &[ 1,  1,  1,  1,  1,  1,  1,  1,  1, -1] &36\\
  \end{tabular}
    \label{tbl:sim_labels}
\end{center}
  \end{table}

\subsection{Synthetic Datasets}
We considered binary classification problems with examples generated from two 10-dimensional Gaussians centered at $\{-0.5\}^{10}$ and $\{+0.5\}^{10}$ with unit variance. We generated noisy labels using the procedure described above. Specifically, we simulated 12 annotators -- one of them always generated the true labels, another flipped all the true labels, the remaining 10 flipped labels using the simulation procedure described above. We randomly generated 100 datasets, each of them having 1000 training examples equally divided between the two classes. We used half of the data set for training and the other half for testing. In each experiment, we tuned the regularization parameter ($\lambda$ in Equation \ref{eq:opt_interactive}) by searching over the range $\{2^{-14},2^{-12},\ldots,2^{12},2^{14}\}$ using 10-fold cross validation on the training set, retrained the model on the entire training set with the best-performing parameter, and report the performance of this model on the test set. We experimented with a range of $(\alpha, p)$ values. Recall that the parameter $\alpha$ influences the reweighting of examples with small values placing (almost) equal weights on all the examples and large values placing a lot of weight on examples whose labels have a large disagreement (Figure \ref{fig:reweight_examples}). The parameter $p$ as mentioned before controls label noise. We compared the performance of the algorithm in interactive and non-interactive modes described in Section \ref{sec:crowdsourcing_method}. The non-interactive algorithm is the one described in Raykar \emph{et al.}'s paper \cite{RaykarYZVFBM10}.

The results are shown in Table \ref{tbl:sim_results}. We use area under the receiver operating characteristic curve (AU-ROC) and area under the precision-recall curve (AU-PRC) as performance metrics.
\begin{table}[!t]
  \caption{Experimental results on synthetic datasets. Also shown in the table are two-sided p-values of the Wilcoxon signed-rank test.}
\begin{center}
  \begin{tabular}{  l | C{2cm} | c || C{2cm} | c}
    Parameters & AU-ROC (\#wins) & p-value & AU-PRC (\#wins) & p-value \\
    \hline
    $\alpha=0.1, p=1$ & 61 & 0.0542 & 61 & 0.0535 \\ 
    $\alpha=1, p=1$ & 75 & $4.11 \times 10^{-11}$ & 75 & $2.97 \times 10^{-11}$ \\ 
    $\alpha=2, p=1$ & 88 & $6.26 \times 10^{-14}$ & 89 & $3.65 \times 10^{-14}$\\ 
    $\alpha=5, p=1$ &  73 & $3.56 \times 10^{-5}$ & 75 & $2.26 \times 10^{-5}$ \\ 
    \hline
    $\alpha=0.1, p=2$ & 50 & 0.5684 & 49 & 0.6199 \\ 
    $\alpha=1, p=2$ & 51 & 0.1822 & 50 & 0.1799  \\ 
    $\alpha=2, p=2$ & 61 & 0.0007 & 61 & 0.0009 \\ 
    $\alpha=5, p=2$ & 49 & 0.5075 & 49 & 0.7463 \\ 
    \hline
    $\alpha=0.1, p=5$ & 32 & 0.4784 & 34 & 0.8479 \\ 
    $\alpha=1, p=5$ & 44 & 0.0615 & 42 & 0.0817  \\ 
    $\alpha=2, p=5$ & 48 & 0.2334 & 49 & 0.3661 \\ 
    $\alpha=5, p=5$ & 37 & 0.0562 & 33 & 0.028 \\ 
  \end{tabular}
    \label{tbl:sim_results}
\end{center}
  \end{table}
In the table, we show the number of times the AU-ROC and the AU-PRC of the interactive algorithm is higher than its non-interactive counterpart (\#wins out of 100 datasets). We also show the two-sided p-value from the Wilcoxon signed-rank test. From the results, we note that the performance of the interactive algorithm is not significantly better than its non-interactive counterpart for small and large values of $\alpha$. This is expected because small values of $\alpha$ reweight examples (almost) uniformly and so there is not much to gain when compared to running the algorithm in the non-interactive mode. In the other extreme, large values of $\alpha$ tend to discard a large number of examples close to the decision boundary thereby degrading the overall performance of the algorithm in the interactive mode. $\alpha=2$ gives the best performance. We also note that for high values of $p$, i.e., weak disagreement among annotators and hence low label noise, the interactive algorithm offers no statistically significant gains when compared to the non-interactive algorithm. This, again, is as expected.

\subsection{Benchmark Datasets}
We used LibSVM benchmark\footnote{\url{https://www.csie.ntu.edu.tw/~cjlin/libsvmtools/datasets/}} datasets in our experiments. We selected binary classification datasets with at most 10,000 training examples and 300 features (Table \ref{tbl:datasets}), so that we could afford to train multiple linear models (100 in our experiments) for every dataset using standard solvers and also afford to tune hyperparameters carefully in a reasonable amount of time.
\begin{table}[!t]
 \caption{Datasets used in the experiments}
\begin{center}
  \begin{tabular}{  l | C{2cm} | C{2cm} |  C{2cm}}
    Name &  No. of training examples & No. of test examples &  No. of features \\
    \hline
    a1a & 1,605 & 30,956 & 123 \\
    a2a & 2,265	& 30,296 & 123 \\
    a3a & 3,185	& 29,376 & 123 \\
    a4a & 4,781	& 27,780 & 123 \\
    a5a & 6,414	& 26,147 & 123 \\
    australian &690 & -  & 14 \\
    breast-cancer & 683 & - & 10 \\
    diabetes & 768 & - & 8 \\
    fourclass & 862 & - & 8 \\
    german.nuner & 1000 & - & 24 \\
    heart & 270 & - & 13 \\
    ionosphere & 351 & - & 34 \\
    liver-disorders & 345 & - & 6 \\
    splice & 1,000 &	2,175 &	60\\
    sonar & 208	& - &	60\\
    w1a & 2,477 &	47,272 &	300 \\
    w2a & 3,470	& 46,279 &	300 \\
    w3a & 4,912	& 44,837 &	300 \\
    w4a & 7,366	& 42,383 &	300 \\
    w5a & 9,888	& 39,861 &	300 \\
  \end{tabular}
   \label{tbl:datasets}
\end{center}
  \end{table}
We generated noisy labels with the same procedure used in our experiments on synthetic data. Also, we tuned the regularization parameter in an identical manner. For datasets with no predefined training and test splits, we randomly selected 75\% of the examples for training and used the rest for testing. For each dataset, we randomly generated 100 sets of noisy labels from the 12 annotators resulting in 100 different random versions of the dataset. The results are shown in Table \ref{tbl:real_results}. 
\begin{table}[!t]
 \caption{Experimental results on benchmark datasets. Statistically \emph{insignificant} results ($\text{p-value}>0.01$) are indicated with an asterisk ($\ast$). 
 }
\begin{center}
  \begin{tabular}{  l | C{2.75cm} | C{2.75cm} | C{2.75cm}}
    Dataset &  $\alpha=1, p=1$ & $\alpha=2, p=1$ &  $\alpha=5, p=1$ \\
    & \scriptsize{AU-ROC} $\mid$ \scriptsize{AU-PRC} & \scriptsize{AU-ROC} $\mid$ \scriptsize{AU-PRC} & \scriptsize{AU-ROC} $\mid$ \scriptsize{AU-PRC} \\
    \hline
    a1a & 59 $\mid$ 65 & 79 $\mid$ 73 & 66 $\mid$ 53* \\
    a2a & 64 $\mid$ 68 & 83 $\mid$ 79 & 66 $\mid$ 57  \\
    a3a & 74 $\mid$ 76 & 88 $\mid$ 83 &  71 $\mid$ 63 \\
    a4a & 80 $\mid$ 83 & 88 $\mid$ 84 & 67 $\mid$ 70 \\
    a5a &  82 $\mid$ 83 & 95 $\mid$ 92 & 79 $\mid$ 74 \\
    australian & 44* $\mid$ 43* & 47* $\mid$ 50* & 41* $\mid$ 43* \\
    breast-cancer & 51 $\mid$ 54 & 60 $\mid$ 63 & 61 $\mid$ 62 \\
    diabetes &  84 $\mid$ 80 & 81 $\mid$ 76 & 67 $\mid$ 65 \\
    fourclass & 38* $\mid$ 37* & 41* $\mid$ 39* & 46* $\mid$ 41 \\
    german.numer & 79 $\mid$ 75 & 73 $\mid$ 67 & 49* $\mid$ 48* \\
    heart & 55* $\mid$ 52* & 63* $\mid$ 58* & 57* $\mid$ 50* \\
    ionosphere & 56* $\mid$ 56* & 64 $\mid$ 65 & 49* $\mid$ 54* \\
    liver-disorders & 67 $\mid$ 64 & 61 $\mid$ 60* & 52* $\mid$ 54* \\
    splice & 93 $\mid$ 93 & 90 $\mid$ 90 & 70 $\mid$ 68 \\
    sonar & 62 $\mid$ 66 & 66 $\mid$ 64 & 56* $\mid$ 50* \\
    w1a & 38* $\mid$ 37* & 48* $\mid$ 46* & 28 $\mid$ 32 \\
    w2a & 64 $\mid$ 57 & 69 $\mid$ 61 & 46* $\mid$ 44* \\
    w3a & 54* $\mid$ 48* & 52* $\mid$ 48* & 34 $\mid$ 33 \\
    w4a & 85 $\mid$ 81 & 79 $\mid$ 74 & 71 $\mid$ 59* \\
    w5a & 89 $\mid$ 80 & 89 $\mid$ 78 & 75 $\mid$ 66 \\
  \end{tabular}
    \label{tbl:real_results}
\end{center}
  \end{table}
In the table, we again show the number of times the AU-ROC and the AU-PRC of the interactive algorithm is higher than its non-interactive counterpart (\#wins out of 100 datasets). We report results on only a subset of $(\alpha, p)$ values that were found to give good results based on our experimental analysis with synthetic data. From the table, it is clear that the interactive algorithm performs significantly better than its non-interactive counterpart on the majority of datasets. On datasets where its performance was worse than that of the non-interactive algorithm, the results were not statistically significant across all parameter settings. 

As a final remark, we would like to point out that the performance of the interactive algorithm dropped on some of the datasets with class imbalance. We therefore subsampled the training sets (using a different random subset in each of the 100 experiments for the given dataset) to make the classes balanced. We believe the issue of class imabalance is orthogonal to the problem we are addressing, but needs further investigation and so we leave this open for future work.

\section{Concluding Remarks}
Our experiments clearly demonstrate the benefits of interactive learning and how disagreement among annotators can be utilized to improve the performance of supervised learning algorithms. Furthermore, we presented theoretical evidence by analyzing the mistake bound of perceptron. The question as to whether annotators in real world scenarios behave according to our simulation model, i.e., if they tend to disagree more on difficult examples located close to the decision boundary when compared to easy examples farther away, is an open one. However, if this assumption holds then our experiments and theoretical analysis show that learning can be improved.

In real-world crowdsourcing applications, an example is typically labeled only by a subset of annotators. Although we did not consider this setting, we believe we could still use the disagreement among annotators to reweight examples, but the algorithm would require some modifications to handle missing labels. We leave this setting open for future work.

\bibliographystyle{unsrt}

\bibliography{bib}

\section*{Appendix: Estimating the Margin of an Example}
\label{sec:margin_estimates}
The margin of an example $x$ with respect to a linear function $f(\cdot)$ is defined as $\gamma(x; f) = |f(x)| = |\ip{w}{x}|$. Examples close to the decision boundary will have a small margin and those that are farther away will have a large margin. We assume that an annotator labels an example $x$ using the true labels of all neighboring examples in a ball of some radius centered at $x$. The size of an annotator's ball is inversely proportional to her strength (expertise). This model of annotators is similar to the one used in Urner \emph{et al.}'s analysis \cite{UrnerBS12}. Note that neither the true labels nor the size of the annotator's ball is known to us. Our only input is a set of $m$ examples with corresponding (noisy) labels from $L$ annotators. Given this input, the goal is to estimate the radius of the annotator's ball. This will then allow us to estimate the margin of an example, i.e., its distance from the separating hyperplane. We proceed in two steps: first, we describe a method to estimate the annotators' expertise scores $\{z_1,z_2,\ldots,z_L\}$ and the ground-truth labels; second, we use these estimates to compute the radii of the annotators' balls. 

\subsubsection*{Estimating an annotator's expertise, $z$.}
We use a variant of kernel target alignment \cite{CristianiniSEK01} to estimate the expertise score of each annotator. Let $K$ denote the (centered) kernel matrix on the input examples, i.e., $K \in [-1,+1]^{m \times m}$ with $k_{ij} = \ip{\phi(x_i)}{\phi(x_j)}$, where $\phi(\cdot)$ is a feature map. For linear models, the entries of the kernel matrix are pairwise dot-products of training examples. We consider the following optimization problem to estimate an annotator's expertise score:
\begin{equation*}
\label{eq:opt_z}
\begin{aligned}
\hat{z} = \argmin_{z \in [0,1]^L} \sum_{i=1}^m\sum_{j=1}^m \left(k_{ij} - \frac{1}{L}\sum_\ell z_\ell y_i^{(\ell)} y_j^{(\ell)}\right)^2 \ .
\end{aligned}
\end{equation*}
This is a constrained least-squares regression problem. The complexity of this optimization problem is quadratic in the number of examples. However, we can use stochastic (projected) gradient descent to remove the dependence on the number of examples. 

The ground-truth label of an example $x_i$ can be estimated by taking the weighted average of labels provided by the annotators, i.e., for each given tuple  $\left(x_i,y_i^{(1)},y_i^{(2)},\ldots,y_i^{(L)}\right)$, we form a new training example $(x_i,\hat{y}_i)$ with $\hat{y}_i = \sgn\left(\sum_{\ell}z_{\ell}y_i^{(\ell)}\right)$ and let $\hat{S} = \{(x_i,\hat{y}_i),\ldots,(x_i,\hat{y}_m)\}$.

\subsubsection*{Estimating the radius of an annotator's ball.}
Let $r_1,r_2,\ldots,r_L \geq 0$ denote the radii of the annotators' balls. Let $B_r(x) = \{z \in \Xcal \mid d(x,z)\leq r\}$ denote the ball of radius $r$ centered at $x$, with $d(\cdot,\cdot)$ being a distance metric, such as the Euclidean distance for linear models, defined on the input space $\Xcal$. Given the expertise score $z_{\ell}$ for an annotator $\ell$, we estimate the radius $r_{\ell}$ of her ball by solving the following univariate optimization problem:
\begin{equation*}
\label{eq:radii}
\begin{aligned}
\hat{r}_{\ell} = \argmin_{r \in \R_+} \sum\limits_{i=1}^m \left( \frac{\sum\limits_{(z,\hat{y}) \in B_r(x_i) \cap \hat{S}} \hat{y}}{|B_r(x_i) \cap \hat{S}|} - y_i^{(\ell)}\right)^2 \ .
\end{aligned}
\end{equation*}
Intuitively, the above optimization problem is trying to estimate the radius of the annotator's ball by minimizing the squared difference between the (noisy) label of the annotator and the average of the estimates of true labels of all neighboring examples in the ball.

\subsubsection*{Putting it all together.}
Given a training example $x$, its noisy labels $(y^{(1)},\ldots,$ $y^{(L)})$, an estimate of the ground-truth label $\hat{y}$, and the radius estimates of the annotators' balls, we compute a lower bound on the margin of $x$, i.e., its distance from the decision boundary, as follows. Centered at $x$, we draw nested balls of increasing size, one for each annotator using her radius. Starting from the annotator with the smallest ball, we compare her noisy label with the ground-truth label estimate. At some ball/expert, the noisy label and the ground-truth label estimate will differ, and the radius of this ball is a lower bound on the distance of $x$ from the decision boundary.

\end{document}